\documentclass[]{fairmeta}

\usepackage[utf8]{inputenc}
\usepackage[T1]{fontenc}
\usepackage{tcolorbox}               % no options here — avoids clash
\tcbuselibrary{breakable, skins}     % load libraries separately
\usepackage{geometry}
\usepackage{hyperref}
\usepackage{algorithm}
\usepackage{algorithmic}
\usepackage{enumitem}                 % for [nosep] lists
\usepackage{booktabs}                 % for \toprule/\midrule/\bottomrule
\usepackage{tabularx}                 % for flexible-width tables
\newcolumntype{Y}{>{\raggedright\arraybackslash\hyphenpenalty=10000\exhyphenpenalty=10000}X}

\newcommand{\method}{MaD-RL}  
\newcommand{\pr}{\mathsf{P}}
\newcommand{\mix}{\mathsf{M}}
\usepackage{amssymb,bm,amsthm,booktabs,xcolor}
\usepackage{natbib} % for \citep
\newtheorem{theorem}{Theorem}

\theoremstyle{remark}

\theoremstyle{plain}

\usepackage{amsmath,amsfonts,bm}

\definecolor{metablue_box}{RGB}{232, 240, 254}
\definecolor{metablueborder}{RGB}{66, 133, 244}
\definecolor{metagray}{RGB}{245, 245, 248}
\definecolor{metagrayborder}{RGB}{180, 180, 190}
\definecolor{mustgreen}{RGB}{26, 127, 55}
\definecolor{mustnotred}{RGB}{179, 38, 30}

\newtcolorbox{constitutionbox}[1][]{
  enhanced,
  colback=metablue_box,
  colframe=metablueborder,
  breakable,
  arc=2mm,
  boxrule=0.5pt,
  left=8pt, right=8pt, top=8pt, bottom=8pt,
  title={#1},
  fonttitle=\bfseries\small,
  coltitle=black,
  attach boxed title to top left={yshift=-2mm, xshift=4mm},
  boxed title style={colback=metablue_box, colframe=metablueborder, arc=1mm, boxrule=0.4pt},
}

\newtcolorbox{principlebox}[1][]{
  enhanced,
  colback=metagray,
  colframe=metagrayborder,
  breakable,
  arc=2mm,
  boxrule=0.4pt,
  left=8pt, right=8pt, top=8pt, bottom=8pt,
  title={#1},
  fonttitle=\bfseries\small,
  coltitle=black,
  attach boxed title to top left={yshift=-2mm, xshift=4mm},
  boxed title style={colback=metagray, colframe=metagrayborder, arc=1mm, boxrule=0.4pt},
}

\newtcolorbox{testbox}{
  enhanced,
  colback=white,
  colframe=metablueborder,
  boxrule=0.4pt,
  arc=1mm,
  left=6pt, right=6pt, top=4pt, bottom=4pt,
}

\title{MaD-RL: Matching Distributions for Calibrating LLMs with Reinforcement Learning}
\author[*]{Sourabh Kulkarni}
\author[*]{Ksheeraj Sai Vepuri}
\author[*]{Basar Demir}
\author{Jason Bohrer}
\author{Emily Shen}
\author{Jianfa Chen}
\author{Nan Jiang}
\author{Ankit Jain}
\author{Harihar Subramanyam}
\author{Mannat Singh}
\author[*]{Chirag Nagpal}

\affiliation{Meta Superintelligence Labs}
\contribution[*]{Equal contribution}

\abstract{Reinforcement learning (RL) is widely used in language-model post-training to maximize rewards assigned to individual model outputs, such as scores from binary verifiers or reward models trained on human feedback. However, applications such as synthetic-data generation, fairness-related constraint satisfaction, and policy exploration require controlling the distribution of outputs across model generations rather than only maximizing expected reward. We propose a general RL-based framework for \textit{Distribution Matching} allowing matching the distribution of a latent categorical attribute of model outputs to a specified target distribution. Empirically, we demonstrate that dominant post-training recipes such as Group Relative Policy Optimization (GRPO) reduce output diversity by concentrating policy probability towards a single mode. Entropy regularization and sampling temperature can improve the spread of the distribution but have constrained effectiveness, limited to apply only in token space and toward uniform distributions. We show that prior work in this area is a specific case of Distribution Matching involving the $L_2$ divergence. We then propose reward functions for other divergences such as KL and Jensen-Shannon and motivate them with theoretical justification. Finally, we demonstrate the effectiveness of our approach on a set of experiments involving mathematical reasoning and programming.
}

\date{\today}
\correspondence{\email{sourabhkul@meta.com}, \email{ksheerajvepuri@meta.com}}
\usepackage{amsmath,amsfonts,bm}

\def\eqref#1{equation~\ref{#1}}
\def\1{\bm{1}}

\def\vx{{\bm{x}}}
\def\vy{{\bm{y}}}

\def\mX{{\bm{X}}}

\def\mZ{{\bm{Z}}}

\DeclareMathAlphabet{\mathsfit}{\encodingdefault}{\sfdefault}{m}{sl}
\SetMathAlphabet{\mathsfit}{bold}{\encodingdefault}{\sfdefault}{bx}{n}

\begin{document}

\maketitle

%========================================================================
\section{Introduction}
\label{section:intro}

Post-training language models with Reinforcement Learning (RL) typically optimizes the expected reward of an individual models generations. Standard methods assign each completion a scalar reward, such as a binary verifier score or a learned preference score, and increase the probability of outputs that receive higher rewards~\citep{christiano2017deep,stiennon2020learning,ouyang2022training,shao2024deepseekmath,lambert2024tulu}. This formulation is natural when each response can be evaluated independently, as in mathematical reasoning. Some applications, however, require control over behavior across repeated generations, such as producing a specified mixture of response languages, solution strategies, styles, or demographic attributes. Standard pointwise objectives do not directly provide this distribution-level control. \Cref{fig:overview} illustrates this distinction.

  \begin{figure*}[!t]
      \centering
      \includegraphics[width=\linewidth]{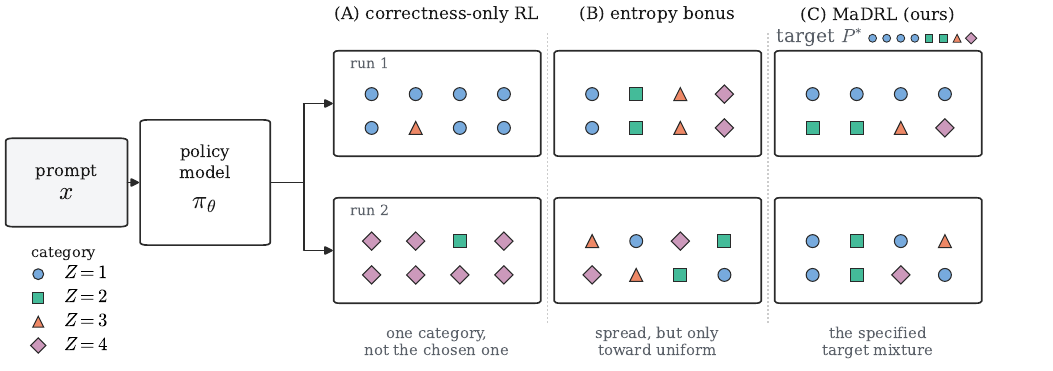}
      \caption{\textbf{Schematic comparison of the category-level objectives.}
      Eight completions sampled for one prompt, one glyph per category, under three
      post-training objectives and two independent runs. Correctness-only RL
      concentrates on a single category but does not control which one: the two
      runs settle on different ones. An entropy bonus can restore spread, but does not
      specify the desired proportions. \method{} instead targets a user-specified
      distribution including a non-uniform one (Column C).}
      \label{fig:overview}
  \end{figure*}

Recent work has begun to optimize properties of output distributions directly. \citeauthor{anschel2025group} introduce GAPO, which uses category frequencies among completions sampled for the same prompt to promote uniform coverage. GAPO relates its frequency reward to entropy maximization, but does not derive it as the policy-gradient coefficient of an explicit target-distribution divergence or compare alternative divergence objectives. \citeauthor{mohri2026generalized} present a general theoretical framework for answer-level distributional optimization and analyze finite-sample reward estimation for squared-Euclidean and KL objectives~(\citeyear{mohri2026generalized}). However, neither work empirically compares $L_2$, forward KL, reverse KL, and Jensen-Shannon (JS) objectives for fitting user-specified uniform, non-uniform, and truncated categorical targets alongside a task reward.
% \newpage
To address this gap, we introduce \emph{\underline{Ma}tching \underline{D}istributions with \underline{R}einforcement \underline{L}earning} (\method{}), a GRPO-compatible method that fits user-specified categorical target distributions while optimizing a task reward. For each prompt, \method{} samples multiple completions, assigns them to predefined categories, and estimates the frequency of each category. It then assigns each completion a distributional reward derived from $L_2$, forward KL, reverse KL, or Jensen--Shannon divergence. We study uniform and non-uniform targets, including peaked targets that assign zero probability to selected categories. Our contributions are:
\begin{itemize}\item We introduce \method{}, which combines task rewards with
distributional feedback from sampled category frequencies. We derive GRPO-compatible
rollout rewards from $L_2$, forward KL, reverse KL, and JS objectives.

\item We systematically compare these objectives in controlled settings with synthetic experiments, as well as experiments involving multilingual mathematical reasoning, and multi-language programming. In experiments with non-uniform targets, the KL and JS-based objectives achieve closer target adherence than the $L_2$ objective.

\item We show that \method{} steers category frequencies toward specified targets and quantify the associated trade-off in task accuracy. Our evaluations measure performance across natural languages in the mathematical reasoning and programming languages in code generation.

\end{itemize}

\section{Matching Distributions with RL (MaDRL)}
\label{section:method}

For each prompt, \method{} samples a group of completions, assigns each one to a
category, compares the group frequencies with the target, and combines the resulting
distributional reward with task correctness in GRPO. We now formalize this procedure.

\subsection{Formalization}

Let $\mX\sim\mathcal D$ be a prompt and
$\vy\sim\bm{\pi}_{\bm\theta}(\cdot\mid \mX)$ a generated completion. A fixed function
$f:\mathcal Y\rightarrow\mZ$ assigns the completion to one
of $K$ categories, $\mZ=\{1,\ldots,K\}$. For $z\in\mZ$, the policy induces
\begin{equation}
\pr_{\pi}(\mZ = z\mid \mX = \vx)
:=
\Pr_{\vy\sim\bm{\pi}_{\bm\theta}(\cdot\mid \vx)}[f(\vy)=z].
\label{eq:category-distribution-main}
\end{equation}
The derivations assume that every output belongs to one of the $K$ target categories,
so $\pr_{\pi}(\cdot\mid \mX = \vx)$ sums to one. In experiments with off-support outputs,
category counts retain the full group size as their denominator and those outputs
receive a fixed penalty. The resulting updates approximate the population objective below.

Let $\pr^*(\cdot\mid \mX = \vx)$ be the desired category distribution. For compactness,
write $\pr^*_{\vx}=\pr^*(\cdot\mid \mX = \vx)$ and
$\pr_{\pi,\vx}=\pr_{\pi}(\cdot\mid \mX = \vx)$. Given a discrepancy $D$, distribution
matching minimizes
\begin{equation}
\begin{aligned}
\bm\theta_D^\star
&=\operatorname*{arg\,min}_{\bm\theta}\mathcal L_D(\bm\theta), \\
\mathcal L_D(\bm\theta)
&=\mathop{\mathsf{E}}_{\mX\sim\mathcal D}
\left[D\!\left(\pr^*_{\mX},\pr_{\pi,\mX}\right)\right].
\end{aligned}
\label{eq:distribution-objective-main}
\end{equation}
The target may depend on the prompt, although all of our experiments use the same
target for every prompt.

To derive a rollout reward for this objective, fix a prompt and write
$\pr^*(z)=\pr^*(z\mid \mX = \vx)$ and $\pr_{\pi}(z)=\pr_{\pi}(z\mid \mX = \vx)$. The
score-function identity gives
\begin{equation}
\nabla_{\bm\theta} \pr_{\pi}(z)
=
\mathop{\mathsf{E}}_{\vy\sim\bm{\pi}_{\bm\theta}(\cdot\mid \vx)}
\left[
\mathbf 1\{f(\vy)=z\}
\nabla_{\bm\theta}\log\bm{\pi}_{\bm\theta}(\vy\mid \vx)
\right].
\label{eq:category-score-identity}
\end{equation}
This identity turns a distribution objective into a rollout-level reward. In
particular, define
\begin{equation}
\texttt{R}_D(z,\vx)
=
-\left.
\frac{\partial D(\pr^*(\cdot\mid \mX = \vx),\pr_{\pi})}{\partial \pr_{\pi}(z)}
\right|_{\pr_{\pi}=\pr_{\pi}(\cdot\mid \mX = \vx)}
+b_D(\vx),
\label{eq:population-coefficient}
\end{equation}
where $b_D(\vx)$ is independent of the category. Since
$\sum_z\nabla_{\bm\theta} \pr_{\pi}(z)=0$, the added term acts as a policy-gradient
baseline, and
\begin{equation}
\mathop{\mathsf{E}}_{\mX,\vy}
\left[
\texttt{R}_D(f(\vy),\mX)
\nabla_{\bm\theta}\log\bm{\pi}_{\bm\theta}(\vy\mid \mX)
\right]
=
-\nabla_{\bm\theta}\mathcal L_D(\bm\theta).
\label{eq:general-divergence-gradient-main}
\end{equation}
Here the expectation samples $\mX\sim\mathcal D$ and then
$\vy\sim\bm{\pi}_{\bm\theta}(\cdot\mid \mX)$.
We choose the baselines in \cref{table:divergence-coefficients} so that each
distributional reward is zero when $\pr^*=\pr_{\pi}$.

\subsection{GAPO and the Squared-$L_2$ Objective}

We first consider squared $L_2$, which recovers the frequency-dependent part of
GAPO~\citep{anschel2025group}.
For a prompt $\vx$, let $\vy_1,\ldots,\vy_G$ be the sampled rollout group and let
$z_i=f(\vy_i)$. The empirical frequency of category $c$ is
\begin{equation}
\widehat{\pr}_{\pi}(c)
=
\frac{1}{G}\sum_{j=1}^{G}\mathbf 1\{z_j=c\}.
\label{eq:group-frequency-main}
\end{equation}
When every rollout is valid and the target is uniform, removing the constant shared by
all rollouts leaves the GAPO reward
\begin{equation}
\widehat{\texttt{R}}^{\,\mathrm{GAPO}}_i
=
\frac{1}{K}-\widehat{\pr}_{\pi}(z_i).
\label{eq:gapo-reward-main}
\end{equation}
Replacing the uniform target with an arbitrary target gives
\begin{equation}
\widehat{\texttt{R}}_{L_2,i}
=
\pr^*(z_i\mid \mX = \vx)-\widehat{\pr}_{\pi}(z_i).
\label{eq:l2-group-reward}
\end{equation}
A category receives a positive reward when it appears less often than requested and a negative reward when it appears too often. This simple rule also has a direct optimization interpretation.

\begin{theorem}[Population squared-$L_2$ identity]
At the population level, the reward
$\texttt{R}_{L_2}(z,\vx)=\pr^*(z)-\pr_{\pi}(z)$ produces a
policy-gradient update equal to the negative gradient of
$D_{L_2}(\pr^*,\pr_{\pi})=\frac{1}{2}\sum_{z=1}^{K}(\pr_{\pi}(z)-\pr^*(z))^2$.
\end{theorem}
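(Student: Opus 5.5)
The plan is to apply the general recipe of \eqref{eq:population-coefficient} and \eqref{eq:general-divergence-gradient-main} with $D=D_{L_2}$, working one fixed prompt $\vx$ at a time and averaging over $\mX\sim\mathcal D$ at the end. First I compute the partial derivative of $D_{L_2}$ with respect to the coordinates of $\pr_{\pi}$, treating them as free variables:
\begin{equation*}
\frac{\partial}{\partial \pr_{\pi}(z)}\,\frac{1}{2}\sum_{c=1}^{K}\bigl(\pr_{\pi}(c)-\pr^*(c)\bigr)^2=\pr_{\pi}(z)-\pr^*(z).
\end{equation*}
Its negative is exactly $\texttt{R}_{L_2}(z,\vx)=\pr^*(z)-\pr_{\pi}(z)$. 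So this reward is \eqref{eq:population-coefficient} with baseline $b_D(\vx)=0$, and it is zero when $\pr^*=\pr_{\pi}$.

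For a self-contained verification, I will also compute the policy-gradient update directly. Since $\texttt{R}_{L_2}(f(\vy),\vx)=\sum_z \mathbf 1\{f(\vy)=z\}\,\texttt{R}_{L_2}(z,\vx)$, linearity and the score-function identity \eqref{eq:category-score-identity} give
\begin{equation*}
\mathop{\mathsf{E}}_{\vy}\bigl[\texttt{R}_{L_2}(f(\vy),\vx)\nabla_{\bm\theta}\log\bm{\pi}_{\bm\theta}(\vy\mid\vx)\bigr]=\sum_{z=1}^{K}\bigl(\pr^*(z)-\pr_{\pi}(z)\bigr)\nabla_{\bm\theta}\pr_{\pi}(z).
\end{equation*}
In this step the reward values are held fixed at the current $\pr_{\pi}$ (stop-gradient), as in any policy-gradient estimator. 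By the chain rule, the right-hand side equals $-\nabla_{\bm\theta}D_{L_2}(\pr^*,\pr_{\pi,\vx})$. Taking the expectation over $\mX$ and exchanging expectation with gradient then yields $-\nabla_{\bm\theta}\mathcal L_{D_{L_2}}(\bm\theta)$. This exchange is justified because $K$ is finite and each term is bounded under mild regularity of $\bm\pi_{\bm\theta}$.

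The main subtlety is that $\pr_{\pi}$ lives on the simplex, so the partial derivative is not intrinsically defined. I handle this with the observation already made in the text: $\sum_z\nabla_{\bm\theta}\pr_{\pi}(z)=0$ because the probabilities sum to one, using the assumption that every output lies in one of the $K$ categories. Consequently, any category-independent shift of the reward, such as the constant dropped when passing to the GAPO form \eqref{eq:gapo-reward-main}, leaves the update unchanged. The chain-rule computation above is therefore valid for the extension of $D_{L_2}$ to $\mathbb R^K$, and the result does not depend on which extension or baseline is chosen.
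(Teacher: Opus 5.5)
Your proposal is correct and follows the paper's route: the main-text proof computes $-\partial D_{L_2}/\partial \pr_{\pi}(z)=\pr^*(z)-\pr_{\pi}(z)$ and invokes the general policy-gradient identity, and your direct check via the score-function identity and the chain rule is the signed-gap derivation in the appendix. Your remark that $\sum_z\nabla_{\bm\theta}\pr_{\pi}(z)=0$ makes the update independent of the baseline and of how $D_{L_2}$ is extended off the simplex is correct and adds a useful clarification.
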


\noindent\emph{Proof.}
For squared $L_2$,
$-\partial D_{L_2}/\partial \pr_{\pi}(z)=\pr^*(z)-\pr_{\pi}(z)$. The result follows
directly from \cref{eq:general-divergence-gradient-main}.

The theorem uses the population probability $\pr_{\pi}(z)$. Because each rollout
contributes to its own estimate $\widehat{\pr}_{\pi}(z)$, the plug-in reward can be trivially extended to the conditional case where each prompt has its own corresponding target distribution. GRPO normalization, clipping, and task-reward mixing further separate
the implemented update from the population gradient.

\subsection{Forward and Reverse KL Objectives}

The same construction gives rewards for other divergence objectives. Because KL
divergence is asymmetric, its two directions produce different rewards. For forward
KL, $D_{\mathrm{FKL}}(\pr^*,\pr_{\pi})=D_{\textrm{KL}}(\pr^*\Vert \pr_{\pi})$, and
\begin{equation}
\texttt{R}_{\mathrm{FKL}}(z,\vx)
=
\frac{\pr^*(z)}{\pr_{\pi}(z)}-1.
\label{eq:fkl-reward}
\end{equation}
For reverse KL, $D_{\mathrm{RKL}}(\pr^*,\pr_{\pi})=D_{\textrm{KL}}(\pr_{\pi}\Vert \pr^*)$, and
\begin{equation}
\texttt{R}_{\mathrm{RKL}}(z,\vx)
=
\log \pr^*(z)-\log \pr_{\pi}(z).
\label{eq:rkl-reward}
\end{equation}
The constants in these expressions are the baseline choices from
\cref{eq:population-coefficient}. Forward KL is singular when $\pr^*(z)>0$ but
$\pr_{\pi}(z)=0$.
Reverse KL is singular when $\pr_{\pi}(z)>0$ but $\pr^*(z)=0$, so the reverse-KL
experiments use an $\varepsilon$-smoothed target for a peaked distribution to ensure numerical stability in practice.

\subsection{Jensen--Shannon Divergence}

Jensen--Shannon divergence provides a symmetric alternative by using the mixture
$\mix=(\pr^*+\pr_{\pi})/2$:
\begin{equation}
\textrm{JSD}(\pr^*,\pr_{\pi})
=
\frac{1}{2}D_{\textrm{KL}}(\pr^*\Vert \mix)
+
\frac{1}{2}D_{\textrm{KL}}(\pr_{\pi}\Vert \mix).
\label{eq:js-objective}
\end{equation}
Taking the negative derivative with respect to $\pr_{\pi}(z)$ gives
\begin{align}
\texttt{R}_{\mathrm{JSD}}(z,\vx)
&=
\frac{1}{2}\left(\log \mix(z)-\log \pr_{\pi}(z)\right) \\
&=
-\frac{1}{2}
\log\left(\frac{2\pr_{\pi}(z)}{\pr^*(z)+\pr_{\pi}(z)}\right).
\label{eq:js-coefficient}
\end{align}
We retain the factor $1/2$ from the standard JSD definition; dropping it would double
the distributional reward relative to correctness. Note that while JSD is bounded above by $\log 2$,
but the corresponding reward for policy gradient optimization is not. We thus require $\epsilon$-smoothness for the JSD reward when running RL similar to KL Divergence.

\paragraph{Connection to GAIL.}
The JSD reward is closely related to the density-ratio reward used in
GAIL~\citep{ho2016generative}. If a discriminator distinguishes categories drawn from
$\pr^*(\cdot\mid \mX = \vx)$ from those drawn from $\pr_{\pi}(\cdot\mid \mX = \vx)$, its
optimum is
$d^*(\vx,z)=\pr^*(z\mid \mX = \vx)/(\pr^*(z\mid \mX = \vx)+\pr_{\pi}(z\mid \mX = \vx))$.
Therefore, substituting the optimal discriminator into the adversarial objective gives
\begin{equation}
\max_\phi V(\phi,\bm\theta)
=
2\,\mathop{\mathsf{E}}_{\mX}
\left[
\textrm{JSD}\!\left(\pr^*_{\mX},\pr_{\pi,\mX}\right)
\right]
-\log 4.
\end{equation}
The corresponding policy reward satisfies
\begin{equation}
-\log\!\left(1-d^*(\vx,z)\right)
=
2\texttt{R}_{\mathrm{JSD}}(z,\vx)+\log 2.
\end{equation}
This is the GAIL argument applied to conditional category distributions rather than
occupancy measures. Unlike GAIL, \method{} does not need a learned discriminator
because its target is known and rollout frequencies are obtained by counting. \Cref{table:divergence-coefficients} summarizes the four population rewards.

\begin{table}[t]
    \centering
    \normalsize
    \setlength{\tabcolsep}{6pt}
    \begin{tabular}{l c c}
    \toprule
    \textbf{Divergence} & \textbf{Objective} & \textbf{Reward $\texttt{R}_D(z,\vx)$}\\
    \midrule
    $L_2$
      & $\displaystyle \frac{1}{2}\sum_d(\pr_{\pi}(d)-\pr^*(d))^2$
      & $\displaystyle \pr^*(z)-\pr_{\pi}(z)$ \\
    $D_{\textrm{KL}}(\pr^*\Vert \pr_{\pi})$
      & $\displaystyle \sum_d \pr^*(d)\log\frac{\pr^*(d)}{\pr_{\pi}(d)}$
      & $\displaystyle \frac{\pr^*(z)}{\pr_{\pi}(z)}-1$ \\
    $D_{\textrm{KL}}(\pr_{\pi}\Vert \pr^*)$
      & $\displaystyle \sum_d \pr_{\pi}(d)\log\frac{\pr_{\pi}(d)}{\pr^*(d)}$
      & $\displaystyle \log \pr^*(z)-\log \pr_{\pi}(z)$ \\
    $\textrm{JSD}(\pr^*,\pr_{\pi})$
      & $\displaystyle
         \frac{1}{2}D_{\textrm{KL}}(\pr^*\Vert \mix)
         +\frac{1}{2}D_{\textrm{KL}}(\pr_{\pi}\Vert \mix)$
      & $\displaystyle
         \frac{1}{2}\bigl(\log \mix(z)-\log \pr_{\pi}(z)\bigr)$ \\
    \bottomrule
    \end{tabular}
    \caption{\textbf{Reward functions corresponding to for the different divergence objectives.}
    Category-independent constants are chosen so each reward vanishes at
    $\pr^*=\pr_{\pi}$.}
    \label{table:divergence-coefficients}
\end{table}

\subsection{Finite-Group Training and Task Rewards}

To use these population rewards in training, we replace $\pr_{\pi}(z)$ with
$\widehat{\pr}_{\pi}(z)$ and treat the
empirical frequency as fixed during the update. For a sampled category,
$\widehat{\pr}_{\pi}(z_i)\geq1/G$, but the nonlinear KL and JSD rewards are still biased at
finite group size. An on-policy update also cannot directly reward a category that is
absent from the group.

Let $\texttt{R}_i^{\mathrm{div}}$ denote the selected empirical distributional reward. In
the reasoning and programming experiments, we combine it with binary correctness in two
ways:
\begin{align}
\texttt{R}_i^{\mathrm{additive}}
&=
\alpha \texttt{R}_i^{\mathrm{correctness}}
+(1-\alpha)\texttt{R}_i^{\mathrm{divergence}} \quad \text{and,} \label{eqn:additive}
\\
\texttt{R}_i^{\mathrm{gated}}
&=
\texttt{R}_i^{\mathrm{correctness}}
\left(1+\lambda_{\mathrm g}\texttt{R}_i^{\mathrm{divergence}}\right).\label{eqn:gated}
\end{align}
The additive form gives distributional feedback to every categorized rollout. The
gated form is a heuristic that applies it only through correct rollouts. Unlike the
additive form, it is affected by category-independent shifts because those shifts also
rescale the correctness reward. We therefore use the baseline choices in
\cref{table:divergence-coefficients}. GRPO then forms group-relative advantages and
applies its standard clipped, KL-regularized update~\citep{shao2024deepseekmath}.

The next section compares these four rewards, first in isolation and then together with
correctness.

% \newpage
\section{Experiments}
\label{section:experiments}

\begin{figure*}[!t]
    \centering
    \includegraphics[width=\textwidth]{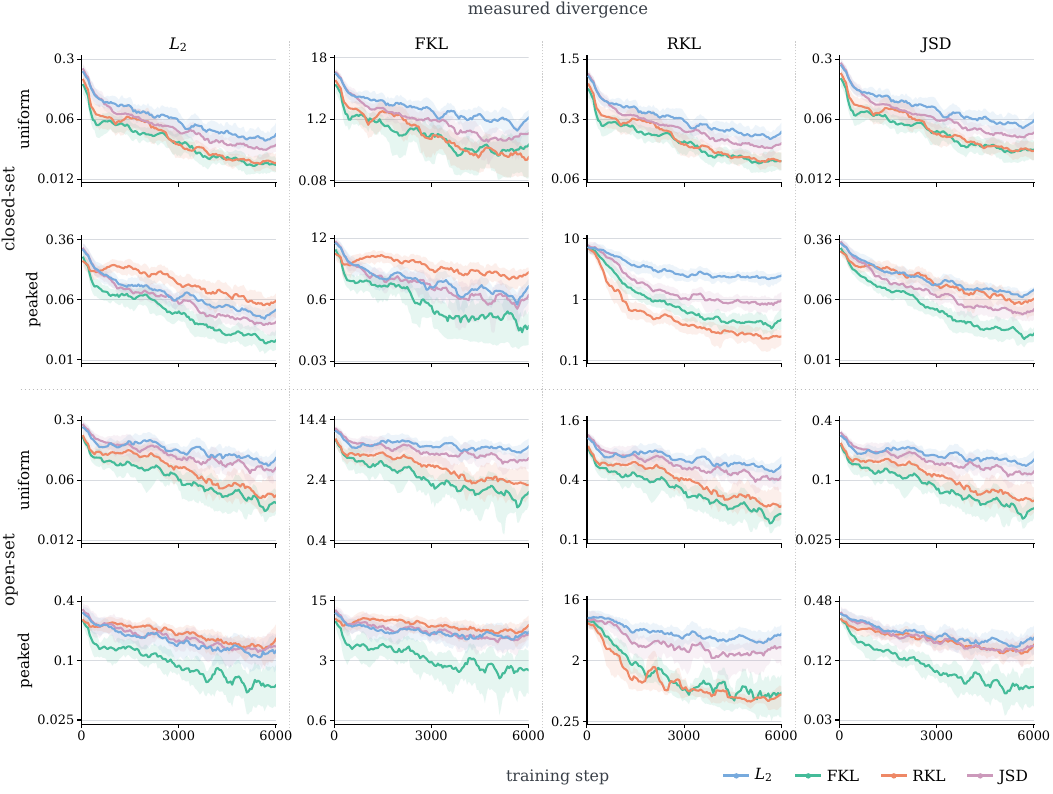}
    \caption{\textbf{Divergence across RL Steps on \textbf{SYNTHETIC}.} The upper block lists the options in the
    prompt (closed-set), the lower block does not (open-set). Rows are targets, columns
    are evaluation metrics, curves are the training objective; every run is scored under
    all four metrics. Markers are means over three seeds; bands span their minimum
    and maximum.}
    \label{fig:closedset-div}
\end{figure*}

We evaluate \method{} across three experiments. We begin with a synthetic choice experiment, run in a closed-set and an open-set form, that allows us to study distribution matching separately from task correctness. We
then move to two realistic post-training
experiments--multilingual mathematical reasoning and multi-language programming,
where we study distribution matching jointly with task correctness. All three
experiments use five mutually exclusive categories and the same two target
distributions.

\subsection{Experiments and Datasets}
\label{section:tasks-datasets}
In this section we describe the datasets and the corresponding experiments we use to benchmark \method{}.

\textbf{SYNTHETIC (closed and open-set choice)}
We construct a synthetic choice task in which each prompt asks the model to select
one answer from five options. We run it in two forms. In the \emph{closed-set} form the
prompt provides the complete set of valid answers; in the \emph{open-set} form the same
clause is removed, so the model must infer the answer set as well as spread over it. In
both, the selected answer defines the output category.
The dataset contains
$64$ common-knowledge topics with ten prompt templates per topic, such as
\emph{"Name one of the five~\ldots"} and
\emph{"Which of the five~\ldots{} comes to mind?"}, with the five options listed in a
different random order in each. For example, the oceans topic lists \emph{pacific,
atlantic, indian, southern, and arctic.} Because every listed option is correct,
category choice does not affect task correctness.

\textbf{Multilingual Mathematical Reasoning.}
We use two mathematics benchmarks: GSM8K~\citep{cobbe2021training}, grade-school
problems, and MATH~\citep{hendrycks2021measuring}, competition problems. In both, the
model must solve each problem and write the solution in one of five languages:
\emph{English, Chinese, Spanish, Hindi, or French}. Correctness is determined by extracting
the final answer from the generated solution and comparing it with the reference
answer.

\textbf{Multi-Language Programming.}
We use CodeContests~\citep{li2022competition}, a competitive-programming benchmark.
The model generates solutions in five programming languages: \emph{Python, C++, Java, Go,
and JavaScript.} Correctness is determined by executing each generated program against
the full test suite.
Together, the mathematics and programming tasks test distributional control when the
choice of category affects correctness.

\textbf{Targets and evaluation metrics.}
All three experiments use the same two target vectors:
\begin{equation}
\begin{aligned}
p^{\star}_{\mathrm{uniform}}
&= \left(\tfrac{1}{5},\tfrac{1}{5},\tfrac{1}{5},\tfrac{1}{5},\tfrac{1}{5}\right), \\
p^{\star}_{\mathrm{peak}}
&= \left(0,0,\tfrac{1}{3},\tfrac{1}{3},\tfrac{1}{3}\right).
\end{aligned}
\label{eq:target-distributions}
\end{equation}
The uniform target assigns equal probability to every category. The peaked target
assigns no mass to two of the categories and divides the mass equally among the
remaining three. On the synthetic choice task, we exclude the first two choices
in alphabetical order. In the math experiment we exclude English and Chinese; in the
programming experiment, Python and C++. The exact zeros make this a more demanding
test of how the four objectives handle categories outside the target support.

We measure distributional fit using Jensen--Shannon divergence (JSD) between the
observed category frequencies and the target; lower is better, and values are reported
in nats. For the two post-training experiments, pass@1 is the probability that a single
sampled completion is correct. For the synthetic experiment, we also score every run
under all four divergences. We report the \emph{off-support rate} as the fraction of
responses that do not match one of the five valid answers.
Together, these metrics separate distributional fit from validity and task performance.

\subsection{Model and Training Protocol}
\label{section:setup}

\textbf{Model and initialization.} All experiments use Qwen3-4B-Instruct-2507~\citep{qwen2025qwen3instruct}. Without
language-specific prompting, the base model produces almost all categorized math solutions in English and programming solutions in Python. Each post-training dataset therefore begins from its own supervised warm-start that makes all five categories reachable during sampling. Every run within a dataset uses the same initialization and remains KL-regularized to it. Evaluation prompts do not request a language, so the observed distribution must result from training rather than inference-time steering.

\textbf{Rewards and comparisons.}
For each prompt, \method{} samples a group of completions, assigns each completion to
a category, and estimates the category frequencies within the group. It then gives
underrepresented categories more credit and overrepresented categories less credit.
We compare rewards derived from four distribution objectives: $L_2$, forward KL
(FKL), reverse KL (RKL), and Jensen--Shannon divergence (JSD)
(\cref{section:method}). For valid outputs under a uniform target, the
frequency-dependent term of the $L_2$ reward is identical to GAPO's frequency
reward~\citep{anschel2025group}. Because the peaked target assigns zero probability
to two categories, the RKL experiments use an $\varepsilon$-smoothed target.
% AUTHOR INPUT REQUIRED: report the numerical epsilon, the flooring rule, and
% whether the smoothed target is renormalized.

  For the math and programming experiments, where correctness varies across
  completions, we combine the binary correctness reward with the distributional reward.
  Let $\alpha$ denote the correctness weight. The main
  results use the \textit{Additive} form as in \cref{eqn:additive} with $\alpha\in\{0.90,0.80,0.70,0.55\}$. This provides distributional
  feedback to both correct and incorrect completions.

  We also evaluate the correctness-gated form as in \cref{eqn:gated}
  which applies the distributional adjustment only to correct completions. We compare these runs with the supervised initialization
and with correctness-only reinforcement learning with verifiable rewards
(RLVR)~\citep{lambert2024tulu}, trained from the same checkpoint and with the same
budget. The full grid is reported in
\cref{appendix:rl-grid}.

\textbf{Optimization and evaluation.}
For the synthetic choice task, we train full-model \method{} for $6000$ steps with
  group size $G{=}32$, learning
  rate $5\times10^{-6}$, rollout temperature $1.0$, and KL coefficient $0.04$. We evaluate on an $80$-prompt validation set every
  $500$ steps and use a separate $80$-prompt holdout set for final evaluation. Four
  objectives, two targets, two prompt forms, and three seeds give $48$ runs.

  For the math and programming experiments, we use the same GTDO approach for $1200$ steps with group size $G{=}16$, learning
  rate $5\times10^{-6}$, rollout temperature $1.0$, and KL coefficient $0.04$. We
  evaluate on a $60$-problem validation set every $100$ steps and use a separate
  $200$-problem holdout set for final evaluation, sampling four completions per problem.
  Each configuration is run with three random seeds on each of the three datasets. For
  compactness, the main figures and \cref{table:rl-main} report, for each objective, the
  median over twelve additive runs: four correctness weights times three seeds. These
  summaries describe typical performance across the tested trade-off settings rather
  than a single tuned value of $\alpha$; \cref{appendix:rl-grid} reports each setting.
% AUTHOR INPUT REQUIRED: state the synthetic-experiment checkpoint-selection rule.

\subsection{Analysis and Results}
\label{section:analysis}

\begin{figure*}[!t]
    \noindent
    % Left: figure
    \begin{minipage}[t]{0.53\textwidth}
        \vspace{0pt}
        \centering

        \includegraphics[width=\linewidth]{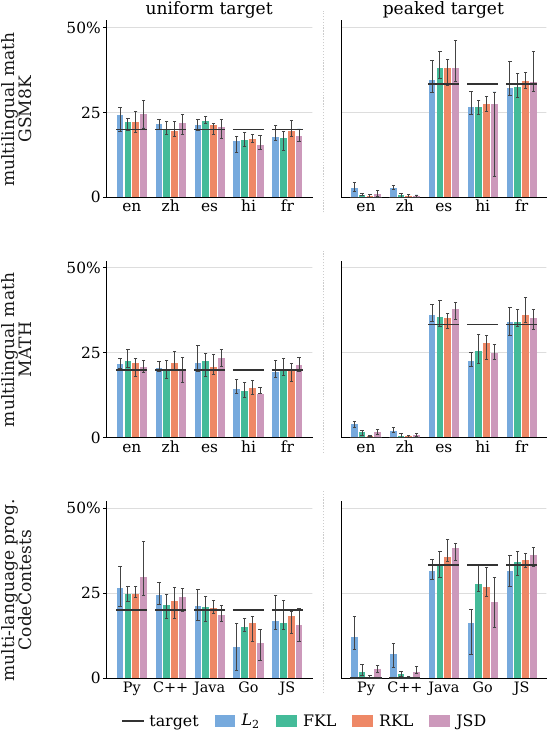}

        \captionof{figure}{\textbf{Category shares against the target.}
        Rows are the three post-training datasets --- GSM8K and MATH for
        multilingual mathematical reasoning, CodeContests for multi-language
        programming --- columns the two targets; the dash is the target share
        for each category. Whiskers are interquartile ranges. All four
        objectives fit the uniform target closely; the peaked target separates
        them; $L_2$ leaves the largest residual on all three peaked targets.}
        \label{fig:rl-results}
    \end{minipage}
    \hfill
    % Right: table
    \begin{minipage}[t]{0.43\textwidth}
        \vspace{0pt}
        \centering

        \scriptsize
        \setlength{\tabcolsep}{2pt}
        \renewcommand{\arraystretch}{1.25}
        
\centering
\footnotesize
\begin{NiceTabular}{l@{~~}c@{~~}c@{~~}c@{~~}c}
\toprule
& \multicolumn{2}{c}{uniform target} & \multicolumn{2}{c}{peaked target} \\
\cmidrule(lr){2-3}\cmidrule(lr){4-5}
Model / reward & JSD $\downarrow$ & pass@1 & JSD $\downarrow$ & pass@1 \\
\midrule
\multicolumn{5}{l}{\emph{Multilingual Math --- GSM8K}} \\[1pt]
~~base & $0.4228$ & $0.850$ & $0.6931$ & $0.850$ \\
~~SFT init & $0.0099$ & $0.804$ & $0.2278$ & $0.804$ \\
~~RLVR & $0.1184$ & $0.867$ & --- & --- \\
~~\method{} \texttt{l2} & $0.0032$ & $0.825$ & $0.0218$ & $0.817$ \\
~~\method{} \texttt{fkl} & $0.0042$ & $0.823$ & $0.0105$ & $0.802$ \\
~~\method{} \texttt{rkl} & $\bm{0.0024}$ & $0.821$ & $\bm{0.0064}$ & $0.810$ \\
~~\method{} \texttt{jsd} & $0.0048$ & $0.840$ & $0.0108$ & $0.804$ \\
\midrule
\multicolumn{5}{l}{\emph{Multilingual Math --- MATH}} \\[1pt]
~~base & $0.4228$ & $0.446$ & $0.6931$ & $0.446$ \\
~~SFT init & $0.0029$ & $0.350$ & $0.1382$ & $0.350$ \\
~~RLVR & $0.0842$ & $0.467$ & --- & --- \\
~~\method{} \texttt{l2} & $0.0062$ & $0.448$ & $0.0291$ & $0.410$ \\
~~\method{} \texttt{fkl} & $0.0073$ & $0.440$ & $0.0109$ & $0.398$ \\
~~\method{} \texttt{rkl} & $\bm{0.0055}$ & $0.444$ & $\bm{0.0065}$ & $0.381$ \\
~~\method{} \texttt{jsd} & $0.0067$ & $0.454$ & $0.0136$ & $0.402$ \\
\midrule
\multicolumn{5}{l}{\emph{Multi-language Programming --- CodeContests}} \\[1pt]
~~base & $0.4228$ & $0.392$ & $0.6931$ & $0.392$ \\
~~SFT init & $0.0007$ & $0.317$ & $0.1591$ & $0.317$ \\
~~RLVR & $0.2614$ & $0.379$ & --- & --- \\
~~\method{} \texttt{l2} & $0.0288$ & $0.352$ & $0.0949$ & $0.338$ \\
~~\method{} \texttt{fkl} & $\bm{0.0068}$ & $0.333$ & $0.0135$ & $0.317$ \\
~~\method{} \texttt{rkl} & $0.0082$ & $0.335$ & $\bm{0.0059}$ & $0.321$ \\
~~\method{} \texttt{jsd} & $0.0168$ & $0.333$ & $0.0292$ & $0.317$ \\
\bottomrule
\end{NiceTabular}

        \captionof{table}{\textbf{Distribution fit and accuracy.}
        JSD in nats, lower is better; accuracy is holdout pass@1,
        $200$ problems at four samples each. RLVR is target-agnostic and
        scored against the uniform target only. RKL fits best in five of
        the six dataset--target combinations and $L_2$ worst in four,
        at $1.2$--$8.5$ points of pass@1 below RLVR.}
        \label{table:rl-main}
    \end{minipage}
\end{figure*}

\textbf{Correctness-only RL changes the output distribution.}
The supervised initializations are close to uniform, but correctness-only RL moves away
from that distribution: the model concentrates in Chinese on GSM8K, in Spanish on
MATH, and largely returns to Python on CodeContests. Across all three datasets, every
uniform-target \method{} configuration achieves lower JSD than the best RLVR seed,
regardless of the distribution objective, reward form, mixing weight, or random seed.
This confirms that optimizing correctness alone does not preserve a desired output
mixture.

\textbf{Synthetic closed-set and open-set choice.}
\label{section:exp-closed} \Cref{fig:closedset-div} plots training-time divergence for both forms; held-out
divergence and off-support rates for every run are in \cref{table:closedset}. On held-out
prompts, FKL achieves the lowest mean divergence in all sixteen form--target--metric
combinations and $L_2$ finishes last in fourteen. On the uniform target, training with FKL, RKL, or JSD yields a
lower final divergence than training with $L_2$ under each of the four evaluation
metrics. When measured by $L_2$ divergence, FKL training reduces the final divergence by
roughly one-half on the uniform target and one-third on the peaked target relative to
$L_2$ training. For a positive-target category that becomes rare, the $L_2$ correction
remains bounded, whereas the population FKL coefficient grows as the category
probability approaches zero. FKL therefore applies a stronger correction when a rare
category is sampled; neither objective can directly recover a category that is absent
from the rollout group. Removing the option list makes the task harder in both respects:
held-out JSD rises from $0.047$--$0.200$ to $0.141$--$0.429$ and the off-support rate
from at most $0.040$ to at least $0.098$. The ordering among the objectives is unchanged,
so we next ask how these rewards behave when distribution matching is coupled with
correctness.

\textbf{Multilingual mathematical reasoning.}
\label{section:exp-realistic} Turning to the realistic tasks, we first consider the two mathematics datasets, GSM8K
and MATH. On both, all four objectives closely match the uniform target, with median
JSD below $0.005$ on GSM8K and below $0.008$ on
MATH (\cref{fig:rl-results,table:rl-main}). The peaked target is harder: RKL obtains the
lowest median JSD on both datasets ($0.0064$ and $0.0065$) and $L_2$ the largest
residual ($0.0218$ and $0.0291$). Across the two targets, pooled median pass@1 ranges
from $0.802$ to $0.840$ on GSM8K, against $0.867$ for correctness-only RLVR, and from
$0.381$ to $0.454$ on MATH, against $0.467$. On the MATH uniform target the four
objectives fall within $0.0018$ nats of each other, close enough to the finite-sample
floor that the ordering is not resolved.

\textbf{Multi-language programming.} We then turn to programming, where the mixture is tied more tightly to accuracy because
the target moves probability away from Python and C++, where the model is strongest
(\cref{fig:rl-results,table:rl-main}). FKL fits the uniform target best and RKL the
peaked target, where the spread is widest: RKL reaches $0.0059$ JSD against $L_2$'s
$0.0949$. Across both targets, pooled median pass@1 ranges from $0.317$ to $0.352$,
against $0.379$ for correctness-only RLVR.
These results illustrate the tradeoff of matching a target that moves probability away from
the model's strongest programming languages.

\textbf{Distributional objective and corresponding reward.} The ranking changes between the synthetic and post-training settings. FKL performs
best when the synthetic task isolates distribution matching. When matching is combined
with correctness, RKL achieves the lowest median JSD in five of the six dataset--target
combinations, including all three peaked targets. The experiments do not isolate which
difference causes this reversal, but they show that no divergence dominates both
settings. On the peaked target, RKL also has higher pass@1 than FKL and JSD, while
$L_2$ retains higher pass@1 at substantially poorer fit.
Neither reward form consistently performs best: additive rewards yield lower JSD on all
three peaked targets and on the programming uniform target, while the gated form is
lower on the two mathematics uniform targets by less than the finite-sample floor
(\cref{table:rl-grid}).
Overall, neither the divergence nor the reward form is uniformly best across settings.

\textbf{Accuracy Trade-offs.} Across the three post-training datasets, \method{} is $1.2$--$8.5$ percentage points
  below correctness-only RLVR in pooled median holdout pass@1. Accuracy differs across
  output languages, so changing the category mixture can also change aggregate pass@1.
  However, because each policy
  chooses which problems to answer in each language, these measurements do not cleanly
  separate mixture effects from changes in within-category behavior.

\textbf{Mode collapse across Categories.} A separate failure mode appears when a requested category becomes rare. Some runs reduce a category with positive target mass to less than $2\%$ of completions.
In an additive run, once every completion in a rollout group has the same category, the
distributional component is constant within the group and disappears under
group-relative centering, so nothing in the reward pulls the missing category back. The
supervised warm-start provides initial coverage but does not eliminate this failure
mode.
Maintaining enough exploration to recover missing categories therefore remains an
important limitation.

\section{Background and Related Work}
\label{section:related}

\textbf{Post-training and diversity.} Early preference-based RLHF learned reward models from human
comparisons~\citep{christiano2017deep}. It was later applied to summarization and
instruction following~\citep{stiennon2020learning,ouyang2022training}. These systems
commonly use PPO~\citep{schulman2017proximal}. More recent verifiable-reward
systems use outcomes that can be checked automatically. DeepSeekMath introduced GRPO
for this setting; T\"ulu~3 includes verifiable-reward training in an open post-training
recipe~\citep{shao2024deepseekmath,lambert2024tulu}. Several studies report that this
kind of post-training can concentrate probability on fewer
outputs~\citep{kirk2024understanding,karouzos2026collapse}. The effect is not limited
to model samples: language-model assistance can also reduce diversity across text
written by people~\citep{padmakumar2024does}. Entropy regularization encourages stochastic
policies~\citep{haarnoja2018soft}, while reference-policy KL control limits drift from
a base model~\citep{jaques2017sequence}. Neither specifies how probability should be
divided among semantic categories.

\textbf{Controllable generation and repeated sampling.} Controllable generation takes a different approach. CTRL conditions the model on
learned control codes~\citep{keskar2019ctrl}, and Plug and Play Language Models guide a
frozen generator with an attribute model~\citep{dathathri2020plug}. Temperature scaling
adjusts the sharpness of the token distribution, while nucleus sampling truncates its
unreliable tail~\citep{holtzman2020curious}. These methods act on one generation at a
time. The distribution across many generations matters in settings such as self-consistency and
repeated-sampling inference~\citep{wang2023selfconsistency,brown2024large}. It also
matters when generated outputs are reused as data: diversity and bias affect synthetic
training sets~\citep{yu2023large}, and the proportions of pretraining domains affect
downstream performance~\citep{xie2023doremi}.

\textbf{Distribution matching for generated text.} The importance of aggregate behavior has also motivated methods that make the target
distribution itself the learning objective.
\citet{khalifa2021distributional} define target text distributions using
user-specified constraints and train a generator to match them.
\citet{korbak2022distribution} distinguish this approach from expected-reward
maximization and use it to limit catastrophic forgetting during language-model
fine-tuning. \citet{jiang2026controlling} study attribute mixtures across repeated model
generations, combining calibrated steering tokens with semantic preference
optimization. The broader distribution-matching literature includes MMD-GAN, which
uses a learned kernel to compare distributions~\citep{li2017mmd}, and GAIL, which
matches expert and policy occupancy measures through an adversarial
objective~\citep{ho2016generative, wilinski2026inverse}. Together, these approaches establish several ways
to control aggregate distributions, including constrained targets, calibrated
steering, kernel matching, and adversarial learning.

\textbf{Groupwise and answer-level objectives.} More recent post-training methods bring aggregate objectives into groupwise policy
optimization. GAPO uses within-group frequencies to promote uniform coverage over
valid outputs~\citep{anschel2025group}. \citet{lochab2026ucpo} likewise address the
indifference of standard verifiable-reward objectives to how probability is distributed
among correct answers, adding an objective that favors uniform coverage of those
answers. SetPO instead assigns credit to a sampled set rather than to each reasoning
path independently~\citep{duan2026setpo}. Distributional Alignment Games derive
rewards over answer marginals through a dual game~\citep{mohri2026dag}, and
\citet{mohri2026generalized} extend this framework to Bregman divergences and
study finite-group estimators for polynomial and logarithmic rewards. These methods are
closest to our setting because they move beyond independent rewards, while our
experiments focus specifically on matching specified categorical proportions alongside
a task reward.

\section{Conclusion}
\label{section:conclusion}

We introduced distribution matching as a post-training objective and \method{} as a GRPO-based method for achieving it. \method{} uses the category frequencies within each rollout group to reward outputs that move the policy toward a specified target
distribution while optimizing the original task.

In the math and programming experiments, \method{} fits both uniform and peaked targets on all three datasets, whereas correctness-only RL shifts the output distribution away from uniform. Relative to the SFT initialization, median holdout pass@1 ranges from $0.2$ percentage points lower to $10.4$ points higher. Relative to correctness-only RLVR, however, it is $1.2$--$8.5$ points lower. Distributional control can therefore preserve or improve the starting model's accuracy, but it carries a measurable tradeoff relative to optimizing correctness.

Empirically we found GAPO style $L_2$ to be the weakest of the choices for distribution matching. It gives the poorest fit in both post-training
experiments, and in the synthetic closed-set choice it is strictly dominated on the fit-versus-validity frontier in two of the four prompt-form $\times$ target cells. It is the best reward in none of them. Task accuracy is similar across all four rewards, so $L_2$ is not trading fit for accuracy; at equal accuracy it simply has the worse fit in terms of distribution matching.

Among the remaining three, the ranking is heavily task-dependent. In the synthetic closed-set
choice, where the valid outputs are known in advance, forward KL matches the target most
closely but has a higher invalid-output rate, while JSD offers the best compromise between fit and validity. Withholding the option list leaves that ordering unchanged but raises both divergence and the invalid-output rate. In the two post-training experiments,
where distribution matching is combined with correctness training, reverse KL matches the peaked target most closely on all three datasets. This reversal suggests that the best divergence for isolated distribution matching need not be the best one when matching is combined with correctness. The choice among these three is therefore a choice about fit, not about accuracy.

We hypothesize that the marginal accuracy improvements stem from the fact that our current experiments use 4B scale model and relatively smaller post-training tasks. At larger model and dataset scales we believe that enhanced diversity would improve exploration resulting in significant improvements in accuracy. We further limit our contributions to the empirical contributions but further research is required to understand the relative differences in the optimization dynamics of the various divergence metrics.

Overall, \method{} provides a simple way to control categorical output distributions during post-training, while making both the trade-off with task accuracy and the choice of divergence explicit.

\clearpage
\newpage
\bibliographystyle{assets/plainnat}
\bibliography{paper}

\clearpage
\newpage
\beginappendix

\section{Proof and Derivation of \method{}}
\label{appendix:method-details}

\noindent\textbf{Signed-gap derivation.} For a fixed prompt $x$, write $\ell_{\mathrm{dist}}(\theta;x)=\tfrac{1}{2}\sum_c(q_\theta(c\mid x)-p_c^\star)^2$. Differentiating the squared error gives
\begin{equation}
-\nabla_\theta\ell_{\mathrm{dist}}(\theta;x)
=
\sum_{c=1}^{C}
\bigl(p_c^\star-q_\theta(c\mid x)\bigr)
\nabla_\theta q_\theta(c\mid x).
\label{eq:distribution-loss-gradient}
\end{equation}
Because $f$ is fixed during the policy update, the score-function identity gives
\begin{equation}
\nabla_\theta q_\theta(c\mid x)
=
\mathbb{E}_{y\sim\pi_\theta(\cdot\mid x)}
\left[
\mathbf{1}\{f(y)=c\}
\nabla_\theta\log\pi_\theta(y\mid x)
\right].
\label{eq:category-probability-gradient}
\end{equation}
Substituting \cref{eq:category-probability-gradient} into
\cref{eq:distribution-loss-gradient} and averaging over prompts gives the signed-gap
gradient that \method{} implements,
\begin{equation}
\begin{aligned}
-\nabla_\theta\mathcal{L}_{\mathrm{dist}}(\theta)
={}& \mathbb{E}_{x,\,y\sim\pi_\theta(\cdot\mid x)} \Bigl[
\bigl(p^\star_{f(y)}-q_\theta(f(y)\mid x)\bigr) \\
&\qquad\quad \times \nabla_\theta\log\pi_\theta(y\mid x) \Bigr],
\end{aligned}
\label{eq:signed-gap-gradient}
\end{equation}
that is, the score function weighted by the signed gap between target and realized
frequency for the completion's own category.

\noindent\textbf{Reward forms.}
\label{appendix:reward-forms}
The distributional term is the signed gap between the target and the realized
frequency for a completion's own category,
\begin{equation}
r^{\mathrm{div}}_i \;=\; p^{*}_{c_i} - \widehat q_{c_i}.
\label{eq:rdiv}
\end{equation}
Two reward forms combine it with the correctness reward. \emph{Additive} takes a convex
combination, so every completion is pushed toward the target whether or not it is
correct,
\begin{equation}
r^{\mathrm{add}}_i \;=\; \alpha\, r^{\mathrm{corr}}_i + (1-\alpha)\, r^{\mathrm{div}}_i ,
\label{eq:add}
\end{equation}
where $\alpha$ is the correctness weight. \emph{Correctness-gated} instead lets the
distributional term modulate only the reward of correct completions,
\begin{equation}
r^{\mathrm{gated}}_i \;=\; r^{\mathrm{corr}}_i \bigl(1 + \lambda_{\mathrm{g}}\, r^{\mathrm{div}}_i\bigr),
\label{eq:gated}
\end{equation}
so steering reallocates probability among correct solutions. The four additive settings
are $\alpha \in \{0.90, 0.80, 0.70, 0.55\}$, placing $10$, $20$, $30$ and $45$ percent of
the reward on the distributional term (\cref{section:setup}).

\noindent\textbf{GRPO update.} Let $r_i$ denote either \cref{eq:add} or \cref{eq:gated}. GRPO forms the group-relative advantage
\begin{equation}
\widehat A_i
=
\frac{r_i-\overline r}{s_r+\epsilon},
\qquad
\overline r=\frac{1}{G}\sum_{j=1}^{G}r_j,
\label{eq:group-relative-advantage}
\end{equation}
where $s_r$ is the reward standard deviation within the group and $\epsilon$ a small
constant. The remaining clipping and KL-regularized update is unchanged.

\begin{algorithm}[t]
\caption{One \method{} update}
\label{alg:gdpo}
\begin{algorithmic}[1]
\REQUIRE prompt $x$, rollout policy $\pi_{\theta_{\mathrm{old}}}$,
target $p^\star$, group size $G$, weight $\alpha$ or $\lambda_{\mathrm{g}}$, task evaluator,
category classifier $f$
\STATE Sample $\{y_i\}_{i=1}^{G}$ from
$\pi_{\theta_{\mathrm{old}}}(\cdot\mid x)$
\STATE Compute $r_i^{\mathrm{corr}}$ and $c_i=f(y_i)$ for each response
\STATE Compute $\widehat q_c\gets
G^{-1}\sum_i\mathbf{1}[c_i=c]$ for each category $c$
\STATE Set $r_i^{\mathrm{div}}\gets
p_{c_i}^\star-\widehat q_{c_i}$
\STATE Form $r_i^{\mathrm{add}}$ or $r_i^{\mathrm{gated}}$
\STATE Compute the group-relative advantages $\widehat A_i$
\STATE Apply the standard clipped, KL-regularized GRPO update
\end{algorithmic}
\end{algorithm}

\noindent\textbf{Per-prompt and population estimates.} Within the $G$ responses
sampled for one prompt, the realized frequency of category $c$ is
\begin{equation}
\widehat q_c
=
\frac{1}{G}\sum_{i=1}^{G}\mathbf{1}\{f(y_i)=c\}.
\label{eq:empirical-category-frequency}
\end{equation}
\Cref{eq:empirical-category-frequency} is a per-prompt estimate. For population-level matching, the counts can instead be pooled over a minibatch of $B$ prompts:
\begin{equation}
\widehat q_c^{\mathrm{pop}}
=
\frac{1}{BG}\sum_{b=1}^{B}\sum_{i=1}^{G}
\mathbf{1}\{f(y_{b,i})=c\}.
\label{eq:population-frequency}
\end{equation}
The reward remains $p_{c_i}^\star-\widehat q_{c_i}$, using $\widehat q_c^{\mathrm{pop}}$ in place of the per-prompt estimate.

\section{Synthetic Choice: Full Results}
\label{appendix:closedset}

\Cref{table:closedset} gives the per-objective numbers behind
\cref{section:exp-closed}, for both prompt forms and on both the training pool and the
held-out split. The two
measures are never combined: JSD to target is computed over valid answers only, and
off-support is the share of answers falling outside the declared set. JSD is reported
raw, in nats, so it does not reach $0$ even for an on-target policy: at $K{=}5$ and
$G{=}32$ the finite-sample floor is $0.017$ nats on the uniform target and $0.008$ on
the peaked one. Those floors are common to all four rewards and do not affect any
comparison below.

\begin{table}[t]\centering\small
\setlength{\tabcolsep}{5pt}
\begin{tabular}{llccc}
\toprule
 & & train & \multicolumn{2}{c}{held-out} \\
\cmidrule(lr){3-3}\cmidrule(lr){4-5}
target & reward & JSD & JSD & off-support \\
\midrule
\multicolumn{5}{l}{\textbf{options shown} (main text)} \\
\multicolumn{5}{l}{\quad\emph{uniform}} \\
 & $L_2$ & $0.0510$ & $0.0901$ & $0.0070$ \\
 & KL$(p\Vert q)$ & $0.0265$ & $0.0467$ & $0.0189$ \\
 & KL$(q\Vert p)$, $\varepsilon$-sm. & $0.0263$ & $0.0504$ & $0.0135$ \\
 & JSD & $0.0388$ & $0.0720$ & $0.0062$ \\
\multicolumn{5}{l}{\quad\emph{truncated}} \\
 & $L_2$ & $0.0706$ & $0.1999$ & $0.0078$ \\
 & KL$(p\Vert q)$ & $0.0201$ & $0.1467$ & $0.0309$ \\
 & KL$(q\Vert p)$, $\varepsilon$-sm. & $0.0564$ & $0.1846$ & $0.0401$ \\
 & JSD & $0.0411$ & $0.1910$ & $0.0103$ \\
\midrule
\multicolumn{5}{l}{\textbf{options inferred}} \\
\multicolumn{5}{l}{\quad\emph{uniform}} \\
 & $L_2$ & $0.1465$ & $0.2053$ & $0.1125$ \\
 & KL$(p\Vert q)$ & $0.0451$ & $0.1413$ & $0.1814$ \\
 & KL$(q\Vert p)$, $\varepsilon$-sm. & $0.0633$ & $0.1424$ & $0.1876$ \\
 & JSD & $0.1176$ & $0.2015$ & $0.1130$ \\
\multicolumn{5}{l}{\quad\emph{truncated}} \\
 & $L_2$ & $0.1893$ & $0.4294$ & $0.1065$ \\
 & KL$(p\Vert q)$ & $0.0598$ & $0.3107$ & $0.2438$ \\
 & KL$(q\Vert p)$, $\varepsilon$-sm. & $0.1546$ & $0.4141$ & $0.1856$ \\
 & JSD & $0.1586$ & $0.4139$ & $0.0979$ \\
\bottomrule
\end{tabular}
\caption{\textbf{Closed-set results, mean over three seeds, $K{=}5$.} JSD to target in
nats over valid answers; lower is better. Train JSD is the mean of the last ten logged
points; held-out is the final evaluation on $80$ prompts from $8$ topics disjoint from
training. Off-support is the held-out rate of answers outside the declared set; it is
not logged on the training pool. Reverse KL is $\varepsilon$-smoothed: its coefficient
is unbounded where the target assigns zero mass, so on the truncated target its value
is set partly by the floor $\varepsilon$. The main text reports the options-shown rows;
the options-inferred rows are discussed in \cref{appendix:closedset}.}
\label{table:closedset}
\end{table}

\paragraph{Tradeoff of withholding the option list.}
The lower half of \cref{table:closedset} gives the same per-objective numbers for the
open-set form, in which the clause listing the valid answers is removed and the model
must infer the answer set as well as spread over it. Nothing else changes: same topics,
same targets, same rewards, same three seeds. Inferring the set has more downsides than any divergence choice. Mean held-out JSD rises from $0.0648$ to $0.1726$ on the uniform
target and from $0.1806$ to $0.3920$ on the peaked one, factors of $2.7$ and $2.2$, and
off-support rises by more than an order of magnitude: the model that must guess the set
frequently guesses outside it, and unlike the shape term this barely improves over
training. The ranking across the four objectives is unchanged
(\cref{section:exp-closed}), which is why we report it as a property of the reward
rather than of the prompt. The two-objective picture is preserved and slightly sharper:
JSD lies on the fit-versus-validity frontier in all four (prompt form $\times$ target)
cells, while $L_2$ is strictly dominated in two of them, here by JSD on the peaked
target ($0.4139$ against $0.4294$ JSD and $0.0979$ against $0.1065$ off-support).

\section{Post-Training Experiments: Setup and Full Grid}
\label{appendix:posttrain}

\subsection{Experimental setup}
\label{appendix:math-setup}
Three datasets share one design. GSM8K and MATH use the natural language of the solution
as the categorical axis, over English, Chinese, Spanish, Hindi and French; CodeContests
uses the programming language of the solution, over Python, C++, Java, Go and JavaScript.
Each dataset has its own SFT init, training pool and held-out evaluation
set; nothing is shared between them except the code that trains and scores them.

\paragraph{Initialization.}
The base policy is Qwen3-4B-Instruct-2507 in all three. Left alone it puts every categorized completion in one
category. Each dataset is therefore first given a short multilingual supervised finetune,
built by rejection sampling correct solutions in each of the five categories. Its only purpose is coverage: it must make all five reachable under sampling
without the prompt naming one. The checkpoint is chosen on a held-out selection split disjoint from both the RL training
pool and the evaluation set, and it is chosen on category coverage rather than accuracy.
The accuracy-optimal checkpoint answers everything in the strongest category, which is the
collapsed policy the method exists to move away from.

\paragraph{Reward.}
The distributional term is the score-function coefficient of one of the four divergences
of \cref{section:method}, computed over the categories present in a group. It combines
with correctness either additively, $\alpha r^{\mathrm{corr}}+(1-\alpha)r^{\mathrm{div}}$, or
correctness-gated, $r^{\mathrm{corr}}(1+\lambda_{\mathrm{g}} r^{\mathrm{div}})$
(\cref{appendix:reward-forms}). The four additive settings place $10$, $20$, $30$ and $45$
percent of the reward on the distributional term. A rollout whose category falls outside
the declared set carries a flat validity penalty.

\paragraph{Training and evaluation.}
Training is $1200$ GRPO steps, group size $16$, learning rate $5\times10^{-6}$, per-token
KL penalty $0.04$ and temperature $1.0$. Evaluation samples $k{=}4$ completions
for each of $200$ held-out problems, $800$ per run, at temperature $1.0$ with no
nucleus or top-$k$ truncation, from a prompt that names no category. Correctness on GSM8K
and MATH is exact match after answer extraction; on CodeContests it is execution against
the full test suite, an average of $124.7$ tests per problem. Category shares are
renormalized over the five categories, so off-support completions are excluded from the
divergence and reported separately. Every configuration is run at three seeds
($42$, $123$, $200$), giving $123$ runs per dataset: four divergences $\times$ five reward
configurations $\times$ two targets $\times$ three seeds, plus three RLVR controls.

\paragraph{The estimator floor.}
JSD computed from a finite sample is biased upward, so it does not reach $0$ even for an
on-target policy. Simulating the estimator at the true target gives a mean of $0.00063$
nats against the uniform target and $0.00032$ against the peaked one. Differences within
a small factor of these are not separable, which is why \cref{section:exp-realistic}
reads the uniform column as a check rather than as a comparison.
\newpage
\subsection{Full reward grid}
\Cref{table:rl-grid} presents the full tabulated rewards corresponding to the experiment grid.

\label{appendix:rl-grid}
\begin{table*}[!ht]
    \centering
    \small
    \begin{NiceTabular}{llcccccc}
    \toprule
    & & \multicolumn{3}{c}{uniform target} & \multicolumn{3}{c}{peaked target} \\
    \cmidrule(lr){3-5}\cmidrule(lr){6-8}
    Reward & Weight & GSM8K & MATH & CodeContests & GSM8K & MATH & CodeContests \\
    \midrule
    \texttt{l2} & \texttt{gated} & $0.0033$ & $0.0081$ & $0.0442$ & $0.0267$ & $0.0281$ & $0.1056$ \\
     & \texttt{add90} & $0.0190$ & $0.0043$ & $0.0615$ & $0.0236$ & $0.0300$ & $0.2178$ \\
     & \texttt{add80} & $0.0023$ & $0.0067$ & $0.0303$ & $0.0210$ & $0.0300$ & $0.0819$ \\
     & \texttt{add70} & $0.0027$ & $0.0073$ & $0.0177$ & $0.0681$ & $0.0280$ & $0.1110$ \\
     & \texttt{add55} & $0.0279$ & $0.0047$ & $0.0285$ & $0.0617$ & $0.0237$ & $0.0554$ \\
    \addlinespace[2pt]
    \texttt{fkl} & \texttt{gated} & $0.0049$ & $0.0045$ & $0.0338$ & $0.0073$ & $0.0163$ & $0.0106$ \\
     & \texttt{add90} & $0.0048$ & $0.0077$ & $0.0070$ & $0.0138$ & $0.0130$ & $0.0285$ \\
     & \texttt{add80} & $0.0037$ & $0.0039$ & $0.0098$ & $0.0137$ & $0.0177$ & $0.0201$ \\
     & \texttt{add70} & $0.0036$ & $0.0044$ & $0.0090$ & $0.0502$ & $0.0126$ & $0.0117$ \\
     & \texttt{add55} & $0.0044$ & $0.0095$ & $0.0061$ & $0.0061$ & $0.0074$ & $0.0056$ \\
    \addlinespace[2pt]
    \texttt{rkl} & \texttt{gated} & $0.0067$ & $0.0095$ & $0.0160$ & $0.0486$ & $0.0104$ & $0.0082$ \\
     & \texttt{add90} & $0.0032$ & $0.0137$ & $0.0396$ & $0.0124$ & $0.0054$ & $0.0100$ \\
     & \texttt{add80} & $0.0028$ & $0.0037$ & $0.0069$ & $0.0045$ & $0.0080$ & $0.0085$ \\
     & \texttt{add70} & $0.0034$ & $0.0050$ & $0.0080$ & $0.0077$ & $0.0095$ & $0.0022$ \\
     & \texttt{add55} & $0.0024$ & $0.0050$ & $0.0025$ & $0.0049$ & $0.0070$ & $0.0036$ \\
    \addlinespace[2pt]
    \texttt{jsd} & \texttt{gated} & $0.0018$ & $0.0115$ & $0.0679$ & $0.0114$ & $0.0253$ & $0.1229$ \\
     & \texttt{add90} & $0.0035$ & $0.0105$ & $0.0848$ & $0.0083$ & $0.0159$ & $0.0445$ \\
     & \texttt{add80} & $0.0063$ & $0.0054$ & $0.0173$ & $0.0530$ & $0.0105$ & $0.0303$ \\
     & \texttt{add70} & $0.0244$ & $0.0101$ & $0.0177$ & $0.0455$ & $0.0105$ & $0.0276$ \\
     & \texttt{add55} & $0.0059$ & $0.0047$ & $0.0112$ & $0.0529$ & $0.0153$ & $0.0124$ \\
    \bottomrule
    \end{NiceTabular}
    \caption{\textbf{Full reward grid, all three datasets.} JSD to target
    in nats, mean over three seeds, at step $1200$. Rows \texttt{add90} to \texttt{add55}
    are the additive form, the suffix giving the percentage weight on correctness
    ($\alpha$ in \cref{eq:add}); \texttt{gated} is the correctness-gated form. The main-text figure and \cref{table:rl-main} pool the four additive
    rows; the gated row is shown here but excluded there, since it is a different
    combination rule rather than a different mixing weight. The mixing weight moves JSD
    by about as much as the divergence does (median spread across the four weights
    $0.0057$ to $0.0444$ nats against $0.0053$ to $0.0911$ across divergences), which is
    why no single weight is reported as representative.}
    \label{table:rl-grid}
\end{table*}

\newpage
\subsection{Per-category accuracy}
\label{appendix:perlang}
Accuracy within a category is not a clean measure of competence in it, because each run
chooses which problems it answers in which category: a run that reaches for Hindi only on
the easy problems will look better at Hindi. We report it anyway, with two guards. Counts are pooled across the twelve additive runs
behind each reward rather than averaged over per-run rates, so a run that emitted a
category twice does not weigh as much as one that emitted it eighty times. A category is
reported only where the pooled count reaches $30$ completions. Under the peaked target the two excluded categories fall below that
floor for most rewards, which is the target working as specified.

\Cref{fig:rl-perlang} gives the result. The direction is consistent across the three
datasets: the categories the SFT init was weakest in gain, and the strong ones give
ground. On CodeContests, Go rises from $0.151$ at the SFT init to $0.165$--$0.210$
and JavaScript from $0.295$ to between $0.252$ and $0.370$, gaining under seven of the
eight reward and target combinations, while C++ falls from $0.465$ to $0.352$--$0.438$. On GSM8K, Hindi rises from $0.605$ to $0.659$--$0.706$ and no other
category moves by more than six points. On MATH almost every category gains, which is the
accuracy effect noted above.

\begin{figure*}[ht]
    \centering
    \includegraphics[width=\textwidth]{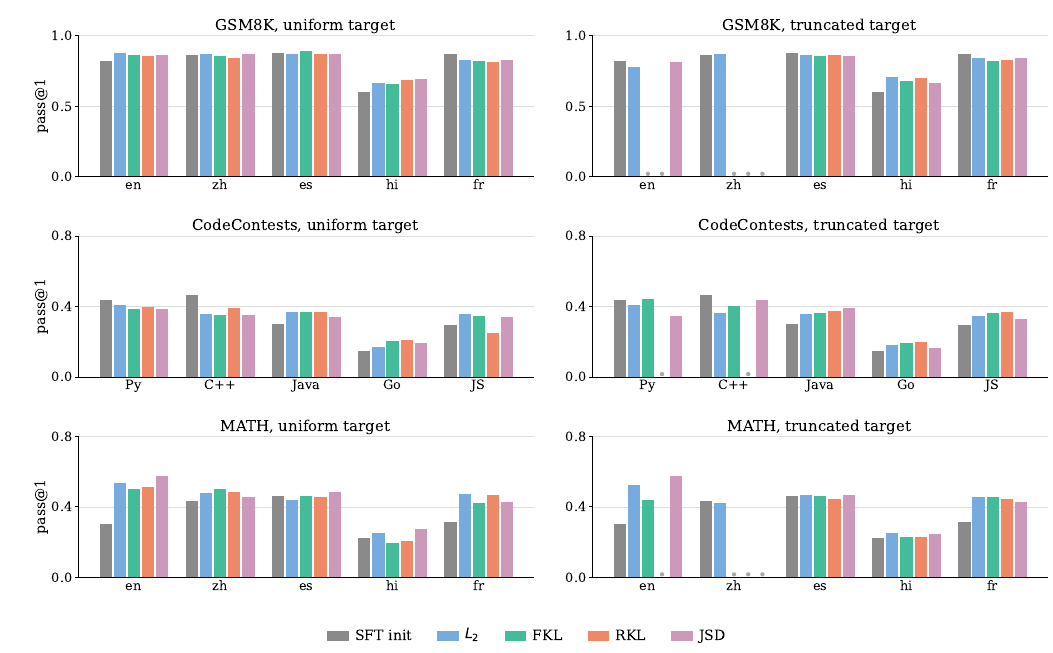}
    \caption{\textbf{Per-category holdout pass@1.} Rows are datasets, columns are targets.
    Each reward bar pools the twelve additive runs behind it by summing correct
    completions and completions. Grey is the SFT init. A dot on the
    baseline marks a category that falls below the $30$-completion reporting floor; it is
    not a measured zero.}
    \label{fig:rl-perlang}
\end{figure*}

\end{document}